\newcommand{\xspace}{\mathcal{X}}
\newcommand{\yspace}{\mathcal{Y}}
\newcommand{\Exp}{\mathbb{E}}
\newcommand{\reals}{{\mathbb{R}}}
\newtheorem{lem}{Lemma}[section]
\newtheorem{asmp}{Assumption}[section]
\newtheorem{defn}{Definition}[section]
\newtheorem{rem}{Remark}[section]
\def\approxcorrect{\checkmark\kern-1.1ex\raisebox{.89ex}{$\times$}}
\def\eqref#1{equation~\ref{#1}}
\def\1{\bm{1}}
\theoremstyle{definition}
\DeclarePairedDelimiterX{\inp}[2]{\langle}{\rangle}{#1, #2}
\definecolor{dkgreen}{rgb}{0,0.6,0}
\definecolor{gray}{rgb}{0.5,0.5,0.5}
\definecolor{mauve}{rgb}{0.58,0,0.82}
\tiny\color{gray},
\begin{document}
\allowdisplaybreaks
\title{Algorithms and Theory for Supervised Gradual Domain Adaptation}
\author{\name Jing Dong \email jingdong@link.cuhk.edu.cn  \\
       \addr The Chinese University of Hong Kong, Shenzhen\\
       \name Shiji Zhou \email zsj17@mail.tsinghua.edu.cn \\ \addr Tsinghua-Berkeley Shenzhen Institute, Tsinghua University\\
       \name Baoxiang Wang \email bxiangwang@cuhk.edu.cn  \\
       \addr The Chinese University of Hong Kong, Shenzhen\\
       \name Han Zhao \email hanzhao@illinois.edu \\
       \addr University of Illinois Urbana-Champaign}
\maketitle
\begin{abstract}
The phenomenon of data distribution evolving over time has been observed in a range of applications, calling for the need for adaptive learning algorithms. We thus study the problem of supervised gradual domain adaptation, where labeled data from shifting distributions are available to the learner along the trajectory, and we aim to learn a classifier on a target data distribution of interest. Under this setting, we provide the first generalization upper bound on the learning error under mild assumptions. Our results are algorithm agnostic, general for a range of loss functions, and only depend linearly on the averaged learning error across the trajectory. This shows significant improvement compared to the previous upper bound for unsupervised gradual domain adaptation, where the learning error on the target domain depends exponentially on the initial error on the source domain. Compared with the offline setting of learning from multiple domains, our results also suggest the potential benefits of the temporal structure among different domains in adapting to the target one. Empirically, our theoretical results imply that learning proper representations across the domains will effectively mitigate learning errors. Motivated by these theoretical insights, we propose a min-max learning objective to learn the representation and classifier simultaneously. Experimental results on both semi-synthetic and large-scale real datasets corroborate our findings and demonstrate the effectiveness of our objectives. 
\end{abstract}

\section{Introduction}
An essential assumption for the deployment of machine learning models in real-world applications is the alignment of training and testing data distributions. Under this condition, models are expected to generalize, yet real-world applications often fail to meet this assumption. Instead, continual distribution shift is widely observed in a range of applications. For example, satellite images of buildings and lands change over time due to city development \citep{christie2018functional}; self-driving cars receive data with quality degrading towards nightfall \citep{bobu2018adapting,wu2019ace}. Although this problem can be mitigated by collecting training data that covers a wide range of distributions, it is often impossible to obtain such a large volume of labeled data in many scenarios. On the other hand, the negligence of shifts between domains also leads to suboptimal performance. Motivated by this commonly observed phenomenon of gradually shifting distributions, we study supervised gradual domain adaptation in this work. Supervised gradual domain adaptation models the training data as a sequence of batched data with underlying changing distributions, where the ultimate goal of learning is to obtain an effective classifier on the target domain at the last step. This relaxation of data alignment assumption thus equips gradual domain adaptation with applicability in a wide range of scenarios. Compared with unsupervised gradual domain adaptation, where only unlabeled data is available along the sequence, in supervised gradual domain adaptation, the learner also has access to labeled data from the intermediate domains. Note that this distinction in terms of problem setting is essential, as it allows for more flexible model adaptation and algorithm designs in supervised gradual domain adaptation. 

The mismatch between training and testing data distributions has long been observed, and it had been addressed with conventional domain adaptation and multiple source domain adaptation \citep{duan2012learning,hoffman2013efficient,hoffman2018cycada,hoffman2018algorithms,zhao2018adversarial,wen2020domain,mansour2021theory} in the literature. Compared with the existing paradigms, supervised gradual domain adaptation poses new challenges for these methods, as it involves more than one training domains and the training domains come in sequence. For example, in the existing setting of multiple-source domain adaptation~\citep{zhao2018adversarial,hoffman2018algorithms}, the learning algorithms try to adapt to the target domain in a one-off fashion. Supervised gradual domain adaptation, however, is more realistic, and allows the learner to take advantage of the temporal structure among the gradually changing training domains, which can lead to potentially better generalization due to the smaller distributional shift between each consecutive pair of domains. 

Various empirically successful algorithms have been proposed for gradual domain adaptation \citep{hoffman14,gadermayr2018gradual,wulfmeier2018incremental,bobu2018adapting}. Nevertheless, we still lack a theoretical understanding of their limits and strengths. The first algorithm-specific theoretical guarantee for unsupervised gradual domain adaptation is provided by \citet{kumar2020understanding}. However, the given upper bound of the learning error on the target domain suffers from exponential dependency (in terms of the length of the trajectory) on the initial learning error on the source domain. This is often hard to take in reality and it is left open whether this can be alleviated in supervised gradual domain adaptation.  

In this paper, we study the problem of gradual domain adaptation under a supervised setting where labels of training domains are available. We prove that the learning error of the target domain is only linearly dependent on the averaged error over training domains, showing a significant improvement compared to the unsupervised case. We show that our results are comparable with the learning bound for multiple source training and can be better under certain cases while relaxing the requirement of access to all training domains upfront simultaneously. Further, our analysis is algorithm and loss function independent. Compared to previous theoretical results on domain adaptation, which used $l_1$ distance \citep{mansour2009domain} or $W_\infty$ distance to capture shifts between data distributions \citep{kumar2020understanding}, our results are obtained under milder assumptions. We use $W_p$ Wasserstein distance to describe the gradual shifts between domains, enabling our results to hold under a wider range of real applications. Our bound features two important ingredients to depict the problem structure: sequential Rademacher complexity \citep{rakhlin2015online} is used to characterize the sequential structure of gradual domain adaptation while discrepancy measure \citep{kuznetsov2017generalization} is used to measure the non-stationarity of the sequence. 

Our theoretical results provide insights into empirical methods for gradual domain adaptation. Specifically, our bound highlights the following two observations: (1) Effective representation where the data drift is ``small'' helps. Our theoretical results highlight an explicit term showing that representation learning can directly optimize the learning bound. (2) There exists an optimal time horizon (number of training domains) for supervised gradual domain adaptation. Our results highlight a trade-off between the time horizon and the learning bound. 

Based on the first observation, we propose a min-max learning objective to learn representations concurrently with the classifier. Optimizing this objective, however, requires simultaneous access to all training domains. In light of this challenge, we relax the requirement of simultaneous access with temporal models that encode knowledge of past training domains. To verify our observations and the proposed objectives, we conduct experiments on both semi-synthetic datasets with MNIST dataset and large-scale real datasets such as FMOW \citep{christie2018functional}. Comprehensive experimental results validate our theoretical findings and confirm the effectiveness of our proposed objective.

\section{Related Work}
\paragraph{(Multiple source) domain adaptation}
Learning with shifting distributions appears in many learning problems. Formally referred as domain adaptation, this has been extensively studied in a variety of scenarios, including computer vision \citep{hoffman14,venkateswara2017deep,zhao2019madan}, natural language processing \citep{blitzer2006domain,blitzer2007biographies,axelrod2011domain}, and speech recognition \citep{sun2017unsupervised,sim2018domain}. When the data labels of the target domain are available during training, known as supervised domain adaptation, several parameter regularization-based methods \citep{yang2007adapting,aytar2011tabula}, feature transformations based methods \citep{saenko2010adapting,kulis2011you} and a combination of the two are proposed \citep{duan2012learning,hoffman2013efficient}. 
On the theoretical side of domain adaptation, \citet{david2010impossibility} investigated the necessary assumptions for domain adaptations, which specifies that either the domains are needed to be similar, or there exists a classifier in the hypothesis class that can attain low error on both domains. For the restriction on the similarity of domains, \citet{wu2019domain} proposed an asymmetrically-relaxed distribution alignment for an alternative requirement. This is a more general condition for domain adaptation when compared to those proposed by \citet{david2010impossibility}, and holds in a more general setting with high-capacity hypothesis classes such as neural networks. \citet{zhao2019learning} then focused on the efficiency of representation learning for domain adaption and characterized a trade-off between learning an invariant representation across domains and achieving small errors on all domains. This is then extended to a more general setting by \citet{zhao2020fundamental}, which provides a geometric characterization of feasible regions in the information plane for invariant representation learning. The problem of adapting with multiple training domains, referred to as multiple source domain adaptation (MDA), is also studied extensively. The first asymptotic learning bound for MDA is studied by \citet{hoffman2018algorithms}. Follow-up work \citet{zhao2018adversarial} provides the first generalization bounds and proposed efficient adversarial neural networks to demonstrate empirical superiority. The theoretical results are further explored by \citet{wen2020domain} with a generalized notion of distance measure, and by \citet{mansour2021theory} when only limited target labeled data are available. 

\paragraph{Gradual domain adaptation}~~
Many real-world applications involve data that come in sequence and are continuously shifting. This first attempt addresses the setting where data from continuously evolving distribution with a novel unsupervised manifold-based adaptation method \citep{hoffman14}. Following works \citet{gadermayr2018gradual,wulfmeier2018incremental,bobu2018adapting} also proposed unsupervised approaches for this variant of gradual domain adaptation with unsupervised algorithms. The first to study the problem of adapting to an unseen target domain with shifting training domains is \citet{kumar2020understanding}. Their result features the first theoretical guarantee for unsupervised gradual domain adaptation with a self-training algorithm and highlights that learning with a gradually shifting domain can be potentially much more beneficial than a Direct Adaptation. The work provides a theoretical understanding of the effectiveness of empirical tricks such as regularization and label sharpening. However, they are obtained under rather stringent assumptions. They assumed that the label distribution remains unchanged while the varying class conditional probability between any two consecutive domains has bounded $W_\infty$ Wasserstein distance, which only covers a limited number of cases. Moreover, the loss functions are restricted to be the hinge loss and ramp loss while the classifier is restricted to be linear. Under these assumptions, the final learning error is bounded by $O(\exp(T)\alpha_0)$, where $T$ is the horizon length and $\alpha_0$ is the initial learning error on the initial domain. This result is later extended by \citet{chen2020self} with linear classifiers and Gaussian spurious features and improved by concurrent independent work \citet{wang2022understanding} to $O(\alpha_0 + T)$ in the setting of unsupervised gradual domain adaptation. The theoretical advances are complemented by recent empirical success in gradual domain adaptation. Recent work \citet{chen2021gradual} extends the unsupervised gradual domain adaptation problem to the case where intermediate domains are not already available. \citet{abnar2021gradual} and \citet{sagawa2021extending} provide the first comprehensive benchmark and datasets for both supervised and unsupervised gradual domain adaptation. 

\section{Preliminaries}
The problem of gradual domain adaptation proceeds sequentially through a finite time horizon $\{1, \dots, T\}$ with evolving data domains. A data distribution $P_t \in \reals^d \times \reals^k $ is realized at each time step with the features denoted as $X \in \reals^d$ and labels as $Y \in \reals^k$. With a given loss function $\ell( \cdot, \cdot) $, we are interested in obtaining an effective classifier $h \in \mathcal{H}: \reals^d \rightarrow \reals^k$ that minimizes a given loss function on the target domain $P_T$, which is also the last domain. With access to only $n$ samples from each intermediate domain $P_1, \ldots, P_{T-1}$, we seek to design algorithms that output a classifier at each time step where the final classifier performs well on the target domain. 

Following the prior work~\citep{kumar2020understanding}, we assume the shift is gradual. To capture such a gradual shift, we use the Wasserstein distance to measure the change between any two consecutive domains. The Wasserstein distance offers a way to include a large range of cases, including the case where the two measures of the data domains are not on the same probability space~\citep{cai2020distances}. 
\begin{defn}{(Wasserstein distance)}
The $p$-th Wasserstein distance, denoted as $W_p$ distance, between two probability distribution $P, Q$ is defined as 
$$
    W_p(P, Q) = \left( \inf_{\gamma \in \Gamma(P,Q)} \int \| x - y\|^p d \gamma (x,y)\right)^{1/p} \,,
$$
where $\Gamma(P,Q)$ denotes the set of all joint distribution $\gamma$ over $(X, Y)$ such that $X \sim P$, $Y \sim Q$.
\end{defn}
Intuitively, Wasserstein distance measures the minimum cost needed to move one distribution to another. The flexibility of Wasserstein distance enables us to derive tight theoretical results for a wider range of practical applications. In comparison, previous results leverage $l_1$ distance \citep{mansour2009domain} or the Wasserstein-infinity $W_\infty$ distance \citep{kumar2020understanding} to capture non-stationarity. In practice, however, this is rarely used and $W_1$ is more commonly employed due to its low computational cost. Moreover, even if pairs of data distributions are close to each other in terms of $W_1$ distance, the $W_\infty$ distance can be unbounded with a few presences of outlier data. Previous literature hence offers limited insights whereas our results include this more general scenario. 
We formally describe the assumptions below. 
\begin{asmp}\label{asmp:gradual}
For all $1 \leq t \leq T-1$ and some constant $\Delta > 0$, the $p$-th Wasserstein distance between two domains is bounded as $W_p(P_{t} (X,Y), P_{t+1}(X,Y)) \leq \Delta$. 
\end{asmp}
We study the problem without restrictions on the specific form of the loss function, and we only assume that the empirical loss function is bounded and Lipschitz continuous. This covers a rich class of loss functions, including the logistic loss/binary cross-entropy, and hinge loss. Formally, let $\ell_h$ be the loss function, $\ell_h = \ell(h(x), y):\xspace\times\yspace\to \mathbb{R}$. We have the following assumption. 
\begin{asmp}\label{asmp:lip}
The loss function $\ell_h : \mathcal{X} \times \mathcal{Y} \rightarrow \mathbb{R}$ is $\rho$-Lipschitz continuous with respect to $(x,y)$, denoted as $\|\ell_h\|_{\text{Lip}}\leq \rho$, and bounded such that $\|\ell_h \|_\infty \leq M $.
\end{asmp}
This assumption is general as it holds when the input data are compact. 
Moreover, we note that this assumption is mainly for the convenience of technical analysis and is common in the literature \citep{mansour2009domain,cortes2011domain,kumar2020understanding}.

The gradual domain adaptation investigated in this paper is thus defined as follows
\begin{defn}[Gradual Domain Adaptation]
Given a finite time horizon $\{1, \dots, T\}$, at each time step $t$, a data distribution $P_t$ over $\reals^d \times \reals^k$ is realized with the features $X \in \reals^d$ and labels $Y \in \reals^k$. The data distributions are gradually evolving subject to Assumption \ref{asmp:gradual}. With a given loss function $\ell( \cdot, \cdot) $ that satisfies Assumption \ref{asmp:lip}, the goal is to obtain a classifier $h \in \mathcal{H}: \reals^d \rightarrow \reals^k$ that minimizes the given loss function on the target domain $P_T$ with access to only $n$ samples from each intermediate domain $P_1, \ldots, P_{T-1}$.
\end{defn}



Our first tool is used to help us characterize the structure of sequential domain adaptation. Under the statistical learning scenario with i.i.d.\ data, Rademacher complexity serves as a well-known complexity notion to capture the richness of the underlying hypothesis space. However, with the sequential dependence, classical notions of complexity are insufficient to provide a description of the problem. To capture the difficulty of sequential domain adaptation, we use the sequential Rademacher complexity, which was originally proposed for online learning where data comes one by one in sequence \citep{rakhlin2015online}. 



\begin{defn}[$\mathcal{Z}$-valued tree \citet{rakhlin2015online}] \label{def:z_valued}
A $\mathcal{Z}$-value tree $z$ is a sequence $(z_1, \ldots, z_T)$ of T mappings, $z_t: \{ \pm 1\}^{t-1} \rightarrow \mathcal{Z}, t \in [1, T]$. A path in the tree is $\epsilon = (\epsilon_1, \ldots, \epsilon_{T-1}) \in \{ \pm 1\}^{T-1}$. To simplify the notations, we write $z_t ( \epsilon) = z_t ( \epsilon_1, \ldots, \epsilon_{t-1})$.
\end{defn}

\begin{defn}[Sequential Rademacher Complexity \citet{rakhlin2015online}]\label{def:seq_complexity}
For a function class $\mathcal{F}$, the sequential Rademacher complexity is defined as 
$
    \mathfrak{R}_{T}^{\mathrm{seq}}(\mathcal{F})=\sup_{\mathbf{z}} \mathbb{E}\left[\sup _{f \in \mathcal{F}} \frac{1}{T}\sum_{t=1}^{T} \epsilon_{t}  f\left(z_{t}(\epsilon)\right)\right] \,,
$
where the supremum is taken over all $\mathcal{Z}$-valued trees (Definition \ref{def:z_valued}) of depth $T$ and $(\epsilon_1, \ldots, \epsilon_{T})$ are Rademacher random variables.
\end{defn}

We next introduce the discrepancy measure, a key ingredient that helps us to characterize the non-stationarity resulting from the shifting data domains. This can be used to bridge the shift in data distribution with the shift in errors incurred by the classifier. To simplify the notation, we let $Z = (X,Y)$ and use shorthand $Z_1^T$ for $Z_1, \ldots, Z_T$. 
\begin{defn}[Discrepancy measure \citet{kuznetsov2020discrepancy}]
\label{def:discrepancy}
\begin{align}
    \operatorname{disc}_T = &\sup _{h \in \mathcal{H}}\left(\mathbb{E}\left[\ell_h\left(X_{T}, Y_T\right) \mid Z_{1}^{T-1}\right] -  \frac{1}{T}\sum_{t=1}^{T}  \mathbb{E}\left[ \ell_h\left(X_{t}, Y_t\right) \mid Z_{1}^{t-1}\right]\right) \,,
\end{align}
where $Z_1^0$ is defined to be the empty set, in which case the expectation is equivalent to the unconditional case.
\end{defn}

We will later show that the discrepancy measure can be directly upper-bounded when the shift in class conditional distribution is gradual. We also note that this notion is general and feasible to be estimated from data in practice \citep{kuznetsov2020discrepancy}. Similar notions have also been used extensively in non-stationary time series analysis and mixing processes \citep{kuznetsov2014generalization,kuznetsov2017generalization}. 

\section{Theoretical Results}
In this section, we provide our theoretical guarantees for the performance of the final classifier learned in the setting described above. Our result is algorithm agnostic and general to loss functions that satisfy Assumption \ref{asmp:lip}. We then discuss the implications of our results and give a proof sketch to illustrate the main ideas. 

The following theorem gives an upper bound of the expected loss of the learned classifier on the last domain in terms of the shift $\Delta$, sequential Rademacher complexity, etc. 

\begin{restatable}{thm}{mainthm} \label{thm:main}
Under Assumptions \ref{asmp:gradual}, \ref{asmp:lip}, with $n$ data points access to each data distribution $P_t$, $t \in \{1, \ldots, T\}$, and loss function $\ell_h = \ell(h(x), y):\xspace\times\yspace\to \mathbb{R}$, the loss on the last distribution incurred by a classifier $h_T$ obtained through empirically minimizing the loss on each domains can be upper bounded by \begin{align}\label{eq:thm}
&\mathbb{E}\left[\ell_{h_T}\left(X_T, Y_{T}\right) \mid Z_{1}^{T-1}\right] \nonumber \\ 
&\leq \mathbb{E}\left[\ell_{h_0}\left(X_T, Y_T\right) \mid Z_{1}^{T-1}\right] + \underbrace{\frac{3}{T} +\frac{3M}{T} \sqrt{8 \log \frac{1}{\delta}}}_{E_1} + \underbrace{\frac{1}{T}\sqrt{\frac{\text{VCdim}(\mathcal{H}) + \log (2/\delta)}{2n}} + O\left(\frac{1}{\sqrt{nT}}\right)}_{E_2} \nonumber \\
&+ \underbrace{18 M \sqrt{4 \pi \log T} \mathfrak{R}_{T-1}^{s e q}(\mathcal{F})+ 3T\rho\Delta}_{E_3}\,,
\end{align}
where $\ell_h \in \mathcal{F}$, $\mathfrak{R}_{T}^{s e q}(\mathcal{F})$ is the sequential Rademacher complexity of $\mathcal{F}$, $\text{VCdim}(\mathcal{H})$ is the VC dimension of $\mathcal{H}$ and $h_{0}=\operatorname{argmin}_{h \in \mathcal{H}} \frac{1}{T}\sum_{t=1}^{T} \ell\left(h(X_t), Y_{t}\right)$.
\end{restatable}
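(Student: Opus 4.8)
The plan is to bound the target risk of the learned classifier by routing through the empirical risk over the observed trajectory, and to split the resulting gap into a non-stationarity component controlled by the gradual-shift assumption and a sequential-concentration component controlled by the sequential Rademacher complexity. Introduce the shorthands $\widehat{L}(h)=\frac{1}{T-1}\sum_{t=1}^{T-1}\ell_h(X_t,Y_t)$ for the empirical trajectory risk over the observed domains, $\overline{L}(h)=\frac{1}{T-1}\sum_{t=1}^{T-1}\mathbb{E}[\ell_h(X_t,Y_t)\mid Z_1^{t-1}]$ for its conditional-expectation surrogate, and $L_T(h)=\mathbb{E}[\ell_h(X_T,Y_T)\mid Z_1^{T-1}]$ for the target risk. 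The backbone of the argument is the uniform deviation bound
\begin{align*}
\sup_{h\in\mathcal{H}}\big(L_T(h)-\widehat{L}(h)\big)\le \underbrace{\sup_{h\in\mathcal{H}}\big(L_T(h)-\overline{L}(h)\big)}_{\text{non-stationarity}}+\underbrace{\sup_{h\in\mathcal{H}}\big(\overline{L}(h)-\widehat{L}(h)\big)}_{\text{concentration}}.
\end{align*}
Given such a bound for every $h$, the theorem follows by applying it to the learned $h_T$, invoking the defining optimality of the empirical minimizer $h_0$ to relate the empirical risks of the learned classifier and the benchmark, and then applying the reverse deviation bound to $h_0$; the two symmetric applications are what double the leading constants in $E_1$ and in the shift term $3T\rho\Delta$.

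For the non-stationarity term I would identify $\sup_h(L_T(h)-\overline{L}(h))$ with the discrepancy $\operatorname{disc}_T$ of Definition~\ref{def:discrepancy} and then bound $\operatorname{disc}_T$ directly from Assumptions~\ref{asmp:gradual} and~\ref{asmp:lip}. Since $\ell_h$ is $\rho$-Lipschitz, Kantorovich–Rubinstein duality gives $|\mathbb{E}_{P_t}[\ell_h]-\mathbb{E}_{P_{t+1}}[\ell_h]|\le\rho\,W_1\le\rho\,W_p\le\rho\Delta$ on each class-conditional, which lifts to the joint distributions because the label marginal is fixed (Assumption~\ref{asmp:label_unchanged}); chaining this across the trajectory and averaging over $t$ yields $\operatorname{disc}_T=O(T\rho\Delta)$, the origin of the $3T\rho\Delta$ summand. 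The finite-sample term $E_2$ arises separately: because each $P_t$ is accessed only through $n$ samples, a standard VC/uniform-convergence bound on each observed domain contributes the $\frac{1}{T}\sqrt{(\text{VCdim}(\mathcal{H})+\log(2/\delta))/2n}$ and $O(1/\sqrt{nT})$ terms once the per-domain estimation errors are pooled across the trajectory.

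The main obstacle is the concentration term $\sup_h(\overline{L}(h)-\widehat{L}(h))$, because the per-domain summands $\ell_h(X_t,Y_t)-\mathbb{E}[\ell_h(X_t,Y_t)\mid Z_1^{t-1}]$ are dependent, so classical i.i.d.\ Rademacher symmetrization is unavailable; they do, however, form a martingale-difference sequence, which is exactly the regime where the sequential Rademacher complexity of Definition~\ref{def:seq_complexity} provides the right notion of capacity. I would control the expected supremum of this martingale process by $\mathfrak{R}_{T-1}^{\mathrm{seq}}(\mathcal{F})$ via sequential symmetrization over $\mathcal{Z}$-valued trees, with the extra $\sqrt{4\pi\log T}$ factor (and the constant $18M$) emerging when the sub-Gaussian tail of the tree process is converted into an in-expectation maximal bound; the residual fluctuation of $\widehat{L}-\overline{L}$ around its mean is then handled by the Azuma–Hoeffding inequality for bounded martingale differences, contributing the concentration terms $\frac{3}{T}+\frac{3M}{T}\sqrt{8\log(1/\delta)}$ collected in $E_1$. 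Assembling the non-stationarity, concentration, and finite-sample pieces together with the $h_0$ comparison yields~\eqref{eq:thm}.
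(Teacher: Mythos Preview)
Your overall architecture matches the paper: route through the discrepancy $\operatorname{disc}_T$ (bounded by $O(T\rho\Delta)$ via the Lipschitz/Wasserstein argument, exactly as you describe) and invoke the sequential Rademacher machinery of Kuznetsov--Mohri (the paper imports their Corollaries~2 and~3 as black boxes rather than redoing the sequential symmetrization, but the content is the same).

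The genuine gap is your ``two symmetric applications'' sandwich. You write that after applying the uniform deviation bound to $h_T$ you will ``invoke the defining optimality of the empirical minimizer $h_0$ to relate the empirical risks''. But $h_0$ is the empirical minimizer, so its optimality gives $\widehat{L}(h_0)\le\widehat{L}(h_T)$, i.e.\ the middle term $\widehat{L}(h_T)-\widehat{L}(h_0)$ is \emph{nonnegative}, not nonpositive. This is the opposite of the usual ERM sandwich, because here $h_T$ is the (unspecified) learned classifier and $h_0$ is the benchmark; nothing in the theorem statement forces $h_T$ to be an empirical minimizer, so this excess risk does not vanish for free. Consequently your explanation of $E_2$ as ``per-domain VC bounds pooled across the trajectory'' is not what is happening, and the doubling you predict does not account for the factor $3$ in the constants.

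The paper bounds this middle term $\frac{1}{T}\sum_t\ell_{h_T}(X_t,Y_t)-\frac{1}{T}\sum_t f_0(Z_t)$ by inserting an intermediary: the sequence of predictors $\hat h_t$ produced by an \emph{optimal online learning algorithm}. Online-learning regret guarantees give $\frac{1}{T}\sum_t\mathbb{E}[\ell_{\hat h_t}]-\frac{1}{T}\sum_t f_0=O(1/\sqrt{nT})$, which is the source of the $O(1/\sqrt{nT})$ term; the remaining gap between $h_T$ and the online learner on each domain is then bounded via the VC inequality, producing the $\frac{1}{T}\sqrt{(\text{VCdim}(\mathcal{H})+\log(2/\delta))/(2n)}$ term. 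The factor $3$ then comes from two copies of $\Phi(Z_1^T)$ in the three-term decomposition plus one more application of Corollary~3 at the end, not from a symmetric two-sided deviation.
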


When $\ell_h \in \mathcal{F}$ is bounded and convex, the sequential Rademacher complexity term is upper bounded by $O(\sqrt{1/nT})$ \citep{rakhlin2015online}. For some complicated function classes, such as multi-layer neural networks, they also enjoy a sequential Rademacher complexity of order $O(\sqrt{1/nT})$ \citep{rakhlin2015online}. Before we move to present a proof sketch of Theorem \ref{thm:main}, we first discuss the implications of our theorem.
\begin{rem}\label{rem:1}
There exists a non-trivial trade-off between $E_1 + E_2$ and $E_3$ through the length $T$. When $T$ is larger, all terms except for the terms in $E_3$ will be smaller while the terms in $E_3$ will be larger. Hence, it is not always beneficial to have a longer trajectory.
\end{rem}
\begin{rem}\label{rem:2}
All terms in (\ref{eq:thm}) except for the last term $3T\rho\Delta$ are determined regardless of the algorithm. 
The last term depends on $\Delta$ which measures the class conditional distance between any two consecutive domains. This distance can potentially be minimized through learning an effective representation of data.
\end{rem}
\paragraph{Comparison with unsupervised gradual domain adaptation}
\label{rem:linear}
Our result is only linear with respect to the horizon length $T$ and the 
average loss $\mathbb{E}\left[\ell_{h_0}\left(X_T, Y_T\right) \mid Z_{1}^{T-1}\right]$, where $h_{0}=\operatorname{argmin}_{h \in \mathcal{H}} $ $\frac{1}{T}\sum_{t=1}^{T} \ell\left(h(X_t), Y_{t}\right)$. In contrast, the previous upper bound given by \citet{kumar2020understanding}, which is for unsupervised gradual domain adaptation, is $O(\exp(T)\alpha_0)$, with $\alpha_0$ being the initial error on the initial domain. It remains unclear, however, if the exponential cost is unavoidable when labels are missing during training as the result by \citet{kumar2020understanding} is algorithm specific. 

\paragraph{Comparison with multiple source domain adaptation}
The setting of multiple source domain adaptation neglects the temporal structure between training domains. Our results are comparable while dropping the requirement of simultaneous access to all training domains. Our result suffers from the same order of error with respect to the Rademacher complexity and from the VC inequality with supervised multiple source domain adaptation (MDA) \citep{wen2020domain}, which is $O\left( \sum_{t}\alpha_t( \ell_{h_T}(X_t, Y_t) + \mathfrak{R}(\mathcal{F}) \right)$, where $\sum_t \alpha_t = 1, \alpha_t > 0, \forall t$ and $\mathfrak{R}(\mathcal{F})$ being the Rademacher complexity.Taking the weights $\alpha_t = \frac{1}{T}$, the error of a classifier $h$ on the target domain similarly relies on the average error of $h$ on training domains. We note that in comparison our results scale with the averaged error of the best classifier on the training domains. 

While we defer the full proof to the appendix, we now present a sketch of the proof. 

\textit{Proof Sketch }
With Assumption \ref{asmp:gradual}, we first show that when the Wasserstein distance between two consecutive class conditional distributions is bounded, the discrepancy measure is also bounded. 
\begin{restatable}{lem}{losswpbound}\label{lem:loss_wp_bound}
Under Assumption \ref{asmp:lip}, the expected loss on two consecutive domains satisfy
$
    \Exp_\mu[\ell_h(X, Y)] - \Exp_\nu[\ell_h(X^\prime, Y^\prime)] \leq \rho\Delta \,,
$
where $\mu, \nu$ are the probability measure for $P_t, P_{t+1}$, $(X, Y) \sim P_t$, and $(X^\prime, Y^\prime) \sim P_{t+1}$. 
\end{restatable}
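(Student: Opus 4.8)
The plan is to reduce the two-domain comparison to a family of one-class-at-a-time comparisons and then convert the bound on the Wasserstein distance between class-conditional distributions (Assumption~\ref{asmp:gradual}) into a bound on the gap of expected losses through the Lipschitz property (Assumption~\ref{asmp:lip}). Although the lemma is stated ``under Assumption~\ref{asmp:lip}'' only, the appearance of $\Delta$ on the right-hand side implicitly invokes Assumption~\ref{asmp:gradual}, and the reduction to class conditionals relies on the label marginal being shared across consecutive domains (Assumption~\ref{asmp:label_unchanged}); I would make all three dependencies explicit at the outset.

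First I would condition on the label. Because $P_t(Y=y)=P_{t+1}(Y=y)$ for every $y$, the loss gap decomposes as
\begin{equation*}
\Exp_\mu[\ell_h(X,Y)] - \Exp_\nu[\ell_h(X',Y')] = \sum_{y\in\yspace} P(Y=y)\Big( \Exp_{P_{t,X\mid Y=y}}[\ell_h(X,y)] - \Exp_{P_{t+1,X\mid Y=y}}[\ell_h(X',y)] \Big),
\end{equation*}
so it suffices to bound the bracketed per-class gap by $\rho\Delta$ uniformly in $y$ and then average, using $\sum_{y} P(Y=y)=1$, to conclude.

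For a fixed label $y$, the map $x\mapsto \ell_h(x,y)$ is $\rho$-Lipschitz by Assumption~\ref{asmp:lip}. Choosing any coupling $\gamma\in\Gamma(P_{t,X\mid Y=y},P_{t+1,X\mid Y=y})$, whose marginals are exactly the two class conditionals, I would write the per-class gap as an expectation under $\gamma$ and bound the integrand by the Lipschitz property:
\begin{equation*}
\Exp_{P_{t,X\mid Y=y}}[\ell_h(X,y)] - \Exp_{P_{t+1,X\mid Y=y}}[\ell_h(X',y)] = \Exp_\gamma[\ell_h(X,y)-\ell_h(X',y)] \le \rho\,\Exp_\gamma[\norm{X-X'}].
\end{equation*}
Applying Jensen's inequality gives $\Exp_\gamma[\norm{X-X'}] \le (\Exp_\gamma[\norm{X-X'}^p])^{1/p}$, and taking the infimum over all couplings $\gamma$ yields the per-class gap $\le \rho\,W_p(P_{t,X\mid Y=y},P_{t+1,X\mid Y=y}) \le \rho\Delta$ by Assumption~\ref{asmp:gradual}. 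Substituting back into the label decomposition and averaging over $y$ finishes the proof.

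The calculations are routine once the reduction is set up; the step requiring genuine care is the label decomposition, which is valid precisely because the label marginal is unchanged — with mismatched marginals the per-class gaps would carry different weights and could not be collapsed into a single $\rho\Delta$ bound. The only other point worth flagging is the passage from $W_1$ (which is what the coupling-plus-Lipschitz argument naturally produces) to the stated $W_p$ distance, which goes through the monotonicity $W_1 \le W_p$ for $p\ge 1$; this is where the added generality in $p$ highlighted in the introduction is actually consumed.
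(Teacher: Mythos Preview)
Your proof is correct, and the reasoning is clean. It does, however, take a genuinely different route from the paper's own argument. The paper proceeds in two stages: it first bounds the \emph{joint} Wasserstein distance $W_p(P_t,P_{t+1})\le\Delta$ by restricting to couplings that match labels exactly (this is where the shared label marginal enters), and only then applies the Lipschitz property of $\ell_h$ on the full $(X,Y)$ space to convert $W_p(\mu,\nu)$ into a loss gap via $W_1\le W_p$. You instead decompose the loss gap by label at the outset, apply the Lipschitz bound in $x$ alone for each fixed $y$, and invoke the class-conditional Wasserstein bound directly --- never passing through the joint $W_p$. Your route is more economical and makes the role of Assumption~\ref{asmp:label_unchanged} completely transparent (it is exactly what allows the per-class gaps to be averaged with common weights). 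The paper's route has the minor side benefit of establishing the joint bound $W_p(P_t,P_{t+1})\le\Delta$ as an intermediate result, and it exercises the Lipschitz assumption in both coordinates rather than only in $x$; but for the lemma as stated, your argument is the more direct one.
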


Then we leverage this result to bound the loss incurred in expectation by the same classifier on two consecutive data distributions. We start by decomposing the discrepancy measure with an adjustable summation term as
\begin{align*}
\operatorname{disc}_T 
\leq & \sup_{h \in \mathcal{H}}\left(\frac{1}{s} \sum_{t=T-s+1}^{T} \mathbb{E}\left[\ell_h\left(X_{t}, Y_t\right) \mid Z_{1}^{t-1}\right] - \frac{1}{T} \sum_{t=1}^{T}  \mathbb{E}\left[ \ell_h\left(X_{t}, Y_t\right) \mid Z_{1}^{t-1}\right]\right) \\
&+ \sup _{h \in \mathcal{H}}\left(\mathbb{E}\left[\ell_h\left(X_{T}, Y_{T}\right) \mid Z_{1}^{T-1}\right] -  \frac{1}{s} \sum_{t=T-s+1}^{T} \mathbb{E}\left[ \ell_h \left(X_{t}, Y_t\right) \mid Z_{1}^{t-1}\right]\right) \,.
\end{align*}
We show by manipulating this adjustable summation, the discrepancy measure can indeed be directly obtained through an application of Lemma \ref{lem:loss_wp_bound}. We now start to bound the learning error in interest by decomposing 
\begin{align*}
    & \mathbb{E}\left[\ell_{h_{T}}\left(X_{T}, Y_{T}\right)\mid Z_{1}^{T-1}\right]- \mathbb{E}\left[\ell_{h_0}\left(X_T, Y_T\right) \mid Z_{1}^{T-1}\right] \\
    \leq & 2\Phi(Z_1^T) + \left( \frac{1}{T}\sum^{T-1}_{t=1} \left[\ell_{h_{T}}\left(X_{t}, Y_{t}\right)\right] - \frac{1}{T}\sum^{T-1}_{t=1}\ell_{h_0}\left(X_T, Y_T\right)  \right)\,,
\end{align*}
where $\Phi\left(Z_{1}^{T}\right)=\sup _{h \in \mathcal{H}}\left(\mathbb{E}\left[\ell_h\left(X_{T}, Y_{T}\right) \mid Z_{1}^{T-1}\right] \right.$ 
$\left.-\sum_{t=1}^{T} \frac{1}{T} \ell_h\left(X_{t}, Y_{t}\right)\right)$.
The term $\Phi\left(Z_{1}^{T}\right)$ can be upper bounded by Lemma \ref{lem:cor2} \citet{kuznetsov2020discrepancy} and thus it is left to bound the remaining term $\frac{1}{T}\sum^{T-1}_{t=1} \left[\ell_{h_{T}}\left(X_{t}, Y_{t}\right)\right] - \frac{1}{T}\sum^{T-1}_{t=1}\ell_{h_0}\left(X_T, Y_T\right) $. To upper bound this difference of average loss, we first compare the loss incurred by a classifier learned by an optimal online learning algorithm to $f_0$. By classic online learning theory results, the difference is upper bounded by $O\left(\frac{1}{\sqrt{nT}}\right)$. Then we compare the optimal online learning classifier to our final classifier $h_T$ and upper bound the difference through the VC inequality \citep{bousquet2004introduction}. 

Lastly, we leverage Corollary 3 of \citet{kuznetsov2020discrepancy} with our terms to complete the proof. 
\hfill $\square$

\section{Insights for Practice}
The key insight indicated by Theorem \ref{thm:main} and Remark \ref{rem:2} is that the bottleneck of supervised gradual domain adaption is not only predetermined through the setup of the problem but also relies heavily on $\rho \Delta$, where $\Delta $ is the upper bound of the Wasserstein class conditional distance between two data domains and $\rho$ is the Lipschitz constant of the loss function. In practice, the loss function is often chosen beforehand and remains unchanged throughout the learning process. Therefore, the only term available to be optimized is $\Delta$, which can be effectively reduced if a good representation of data can be learned for classification. We give a feasible primal-dual objective that learns a mapping function from input to feature space concurrently with the original classification objective

\paragraph{A primal-dual objective formulation}
Define $g$ to be a mapping that maps $X \in \mathbb{R}^d$ to some feature space. We propose the learning objective as to learn a classifier $h$ simultaneously with the mapping function $g$ with the exposure of historical data $Z_1^{T-1}$. With the feature $g(X)$ from the target domain, our learning objective is now
$
\mathbb{E}\left[\ell_h(g(X_{T}), Y_T))|Z_1^{T-1} \right] 
-  \inf_{h^\ast,g^\ast} \mathbb{E}\left[\ell_{h^\ast} (g^\ast(X_{T}), Y_T)|Z_1^{T-1} \right] 
$.
Intuitively, this can be viewed as a combination of two optimization problems where both $\Delta$ and the learning loss are minimized.

The objective is hard to evaluate without further assumptions. Thus we restrict our study to the case where both $g$ and $h$ are parametrizable. Specifically, we assume $g$ is parameterized by $\omega$ and $h$ is parameterized by $\theta$. Then we leverage the Wasserstein-$1$ distance's dual representation (Equation \ref{eq:dual}, \citet{KR58}) to derive an objective for learning the representation, 
\begin{align}\label{eq:dual}
    W_1 (P, Q) = \sup_{\gamma \in \Gamma(P, Q)} \int \gamma(x) d P(x) - \int \gamma(y) dQ(y) = \sup_{\gamma \in \Gamma(P, Q)} \mathbb{E}_P [ \gamma(x)] - \mathbb{E}_Q [ \gamma(y)] \,. 
\end{align}
With this, the following primal-dual objective can be used to concurrently find the best-performing classifier and representation mapping, 
\begin{align}\label{eq:optobj}
    &\min_\theta \max_\omega \mathbb{E}\left[\ell_{h_{\theta,T}}\left(g_\omega(X_{T}), Y_T\right) \mid Z_{1}^{T-1}\right] + \lambda L_D \,, 
\end{align}
where $L_D = \max_{t
}\mathbb{E}_{P_t}\left[g_\omega(X_{t})\right] - \mathbb{E}_{P_{t+1}}\left[g_\omega(X_{t+1})\right] $ and $\lambda$ is a tunable parameter.

\paragraph{One-step and temporal variants}
Notice that $L_D$ relies on the maximum distance across all domains. It is thus hard to directly evaluate $L_D$ without simultaneous access to all domains. With access only to the current and the past domains, we could optimize the following one-step primal-dual loss at time $t$ instead.
\begin{align} \label{eq:onesteploss}
    &\min_\theta \max_\omega \mathbb{E}\left[\ell_{h_{\theta,t}}\left( g_\omega(X_{t}), Y_t\right) \mid Z_{1}^{t}\right] + \lambda L_{D_t} \,,
\end{align}
where $L_{D_t} = \mathbb{E}_{P_t}\left[g_\omega(X_{t})\right] - \mathbb{E}_{P_{t+1}}\left[g_\omega(X_{t+1})\right] $. 

Compared to the objective (\ref{eq:optobj}), the one-step loss (\ref{eq:onesteploss}) only gives us partial information, and directly optimizing it may often lead to suboptimal performance. While it is inevitable to optimize with some loss of information under the problem setup, we use a temporal model (like an LSTM) to help preserve historical data information in the process of learning mapping function $g$. In particular, in the temporal variant, we will be using the hidden states of an LSTM to dynamically summarize the features from all the past domains. Then, we shall use the feature distribution computed from the LSTM hidden state to align with the feature distribution at the current time step. 

To practically implement these objectives, we can use neural networks to learn the representation and the classifier, and another neural network is used as a critic to judge the quality of the learned representations. To minimize the distance between representations of different domains, one can use $W_1$ distance as an empirical metric. Then the distance of the critic of the representations in different domains is then minimized to encourage the learning of similar representations. We note that the use of $W_1$ distance, which is easy to evaluate empirically, to guide representation learning has been practiced before~\citep{shen2018wasserstein}. We take this approach further to the problem of gradual domain adaption. 

\section{Empirical Results}
In this section, we perform experiments to demonstrate the effectiveness of supervised gradual domain adaptation and compare our algorithm with No Adaptation, Direct Adaptation, and Multiple Source Domain Adaptation (MDA) on different datasets. We also verify the insights we obtained in the previous section by answering the following three questions: 
\begin{enumerate}
    \item \textbf{How helpful is representation learning in gradual domain adaptation?} Theoretically, effective representation where the data drift is ``small'' helps algorithms to gradually adapt to the evolving domains. This corresponds to minimizing the $\rho \Delta$ term in our Theorem \ref{thm:main}. We show that our algorithm with objective (\ref{eq:onesteploss}) outperforms the objective of empirical risk (No Adaptation). 
    \item \textbf{Can the one-step primal-dual loss (\ref{eq:onesteploss}) act as an substitute to optimization objective (\ref{eq:optobj})?} Inspired by our theoretical results (Theorem \ref{thm:main}), the primal-dual optimization objective (\ref{eq:optobj}) should guide the adaptation process. However, optimization of this objective requires simultaneous access to all data domains. We use a temporal encoding (through a temporal model such as LSTM) of historical data to demonstrate the importance of the information of past data domains. We compare this to results obtained with a convolutional network (CNN)-based model to verify that optimizing the one-step loss (\ref{eq:onesteploss}) with a temporal model could largely mitigate the information loss. 
    \item \textbf{Does the length of gradual domain adaptation affect the model's ability to adapt?} Our theoretical results suggest that there exists an optimal length $T$ for gradual domain adaptation. Our empirical results corroborate this as when the time horizon passes a certain threshold the model performance is saturated.
\end{enumerate}

\subsection{Experimental Setting}
We conduct our experiments on Rotating MNIST, Portraits, and FMOW, with a detailed description of each dataset in the appendix. We compare the performance of no adaptation, direct adaptation, and multiple source domain adaptations with gradual adaptation. The implementation of each method is also included in the appendix. Each experiment is repeated over 5 random seeds and reported with the mean and $1$ std.

\subsection{Experimental Results}
\begin{table*}[ht]\centering
\captionsetup{justification=centerlast}
\caption{Results on rotating MNIST dataset with Gradual Adaptation on 5 domains, Direct Adaptation, and No Adaptation. }\label{table:rotate}
\begin{tabular}{@{}cccccc@{}}
\toprule
\multicolumn{6}{c}{Rotating MNIST with 5 domains}                                                                                                                                                                                   \\ \midrule
\multicolumn{1}{c|}{}             & \multicolumn{2}{c|}{Gradual Adaptation}                         & \multicolumn{2}{c|}{Direct Adaptation}                                        & No Adaptation    \\ \midrule
\multicolumn{1}{c|}{}             & \multicolumn{1}{c|}{CNN}              & \multicolumn{1}{c|}{LSTM}             & \multicolumn{1}{c|}{CNN}              & \multicolumn{1}{c|}{LSTM}             & CNN              \\ \midrule
\multicolumn{1}{c|}{0-30 degree}  & \multicolumn{1}{c|}{90.21 $\pm$ 0.48} & \multicolumn{1}{c|}{\textbf{94.83} $\pm$ 0.49} & \multicolumn{1}{c|}{77.97 $\pm$ 0.99} & \multicolumn{1}{c|}{89.72 $\pm$ 0.73} & 79.76 $\pm$ 3.20 \\ \midrule
\multicolumn{1}{c|}{0-60 degree}  & \multicolumn{1}{c|}{87.35 $\pm$ 1.02} & \multicolumn{1}{c|}{\textbf{92.52} $\pm$ 0.25} & \multicolumn{1}{c|}{73.27 $\pm$ 1.51} & \multicolumn{1}{c|}{88.53 $\pm$ 0.76} & 58.36 $\pm$ 2.59 \\ \midrule
\multicolumn{1}{c|}{0-120 degree} & \multicolumn{1}{c|}{82.38 $\pm$ 0.57} & \multicolumn{1}{c|}{\textbf{89.72} $\pm$ 0.35} & \multicolumn{1}{c|}{62.52 $\pm$ 1.06} & \multicolumn{1}{c|}{84.30 $\pm$ 2.60} & 38.25 $\pm$ 0.61 \\ \bottomrule
\end{tabular}
\end{table*}


\begin{table*}[ht]\centering
\captionsetup{justification=centerlast}
\caption{Results on rotating MNIST dataset (5 domains) with Gradual Adaptation, MDA (MDAN)~\citep{zhao2018adversarial} and Meta-learning (EAML)~\citep{liu2020learning}. }\label{table:rotate2}
\scalebox{0.9}{\begin{tabular}{@{}ccccccc@{}}
\toprule
\multicolumn{7}{c}{Rotating MNIST with 5 domains}                                                                                                                                                                                                                                                                      \\ \midrule
\multicolumn{1}{c|}{}             & \multicolumn{2}{c|}{Gradual Adaptation}                                       & \multicolumn{3}{c|}{MDAN}                                                                                                                                                      & EAML              \\ \cmidrule(l){2-7} 
\multicolumn{1}{c|}{}             & \multicolumn{1}{c|}{CNN}              & \multicolumn{1}{c|}{LSTM}             & \multicolumn{1}{c|}{Maxmin}           & \multicolumn{1}{c|}{Dynamic}          & \multicolumn{1}{c|}{\begin{tabular}[c]{@{}c@{}}Dynamic \\ with last \\ 2 domains\end{tabular}} & CNN               \\ \midrule
\multicolumn{1}{c|}{0-30 degree}  & \multicolumn{1}{c|}{90.21 $\pm$ 0.48} & \multicolumn{1}{c|}{9.83 $\pm$ 0.49} & \multicolumn{1}{c|}{93.62 $\pm$ 0.87} & \multicolumn{1}{c|}{\textbf{95.79} $\pm$ \textbf{0.33}} & \multicolumn{1}{c|}{83.04 $\pm$ 0.29}                                                          & 79.10 $\pm$  6.99  \\ \midrule
\multicolumn{1}{c|}{0-60 degree}  & \multicolumn{1}{c|}{87.35 $\pm$ 1.02} & \multicolumn{1}{c|}{\textbf{92.52} $\pm$ \textbf{0.25}} & \multicolumn{1}{c|}{91.99 $\pm$ 0.51} & \multicolumn{1}{c|}{92.27 $\pm$ 0.26} & \multicolumn{1}{c|}{61.49 $\pm$ 0.72}                                                          & 54.64 $\pm$  4.08  \\ \midrule
\multicolumn{1}{c|}{0-120 degree} & \multicolumn{1}{c|}{82.38 $\pm$ 0.57} & \multicolumn{1}{c|}{\textbf{89.72} $\pm$ \textbf{0.35}} & \multicolumn{1}{c|}{87.25 $\pm$ 0.52} & \multicolumn{1}{c|}{88.57 $\pm$ 0.21} & \multicolumn{1}{c|}{41.14 $\pm$ 1.77}                                                          & 17.91 $\pm$ 10.34 \\ \bottomrule
\end{tabular}}
\end{table*}

\begin{table*}[ht]\centering
\caption{Results on FMOW with Gradual Adaptation with 3 domains, Direct Adaptation, and No Adaptation. }\label{table:fmow}
\begin{tabular}{@{}clcc@{}}
\toprule
\multicolumn{4}{c}{FMOW}                                      \\ \midrule
\multicolumn{1}{c|}{\begin{tabular}[c]{@{}c@{}}No Adaptation\\ with ERM\end{tabular}} & \multicolumn{1}{c|}{\begin{tabular}[c]{@{}c@{}}Direct Adaptation\\ with CNN\end{tabular}} & \multicolumn{1}{c|}{\begin{tabular}[c]{@{}c@{}}Gradual Adaptation \\ with CNN\end{tabular}} & \begin{tabular}[c]{@{}c@{}}Gradual Adaptation\\ with LSTM\end{tabular} \\ \midrule
\multicolumn{1}{c|}{33.10 $\pm$ 1.94}& \multicolumn{1}{c|}{41.94 $\pm$ 2.73}& \multicolumn{1}{c|}{36.86 $\pm$ 1.91}& 
\textbf{43.52} $\pm$ 1.40                                              \\ \bottomrule
\end{tabular}
\end{table*}

\paragraph{Learning representations further help in gradual adaptation}~~

On rotating MNIST, the performance of the model is better in most cases when adaptation is considered (Table \ref{table:rotate}), which demonstrates the benefit of learning proper representations. With a CNN architecture, the only exception is when the shift in the domain is relatively small ($0$ to $30$ degree), where the No Adaptation method achieves higher accuracy than the Direct Adaptation method by $2\%$. However, when the shift in domains is relatively large, Adaptation methods are shown to be more successful in this case and this subtle advantage of No Adaptation no longer holds. Furthermore, Gradual Adaptation further enhances this outperformance significantly. This observation shows the advantage of sequential adaptation versus direct adaptation.
\begin{table}[htb]
\centering
\caption{Results on Portraits with Gradual Adaptation for different lengths of horizon $T$, Direct Adaptation, and No Adaptation. }\label{table:port}
\begin{tabular}{@{}ccc@{}}
\toprule
\multicolumn{3}{c}{Portraits}                                                                        \\ \midrule
\multicolumn{1}{c|}{}                     & \multicolumn{1}{c|}{CNN}              & LSTM             \\ \midrule
\multicolumn{1}{c|}{No Adaptation}         & \multicolumn{1}{c|}{76.01 $\pm$ 1.45} &  N/A                \\ \midrule
\multicolumn{1}{c|}{Direct Adaptation}    & \multicolumn{1}{c|}{86.86 $\pm$ 0.84} &   N/A               \\ \midrule
\multicolumn{1}{c|}{Gradual - 5 Domains}  & \multicolumn{1}{c|}{87.77 $\pm$ 0.98} & 87.41 $\pm$ 0.76 \\ \midrule
\multicolumn{1}{c|}{Gradual - 7 Domains}  & \multicolumn{1}{c|}{89.14 $\pm$ 1.64} & 89.15 $\pm$ 1.12 \\ \midrule
\multicolumn{1}{c|}{Gradual - 9 Domains}  & \multicolumn{1}{c|}{90.46 $\pm$ 0.54} 
& 89.88 $\pm$ 0.54 \\ \midrule
\multicolumn{1}{c|}{Gradual - 11 Domains} & \multicolumn{1}{c|}{90.56 $\pm$ 1.21} 
& 90.93 $\pm$ 0.75 \\ \midrule 
\multicolumn{1}{c|}{Gradual - 12 Domains} 
& \multicolumn{1}{c|}{91.45 $\pm$ 0.27} 
& 90.73 $\pm$ 0.66 \\\midrule
\multicolumn{1}{c|}{Gradual - 13 Domains} & \multicolumn{1}{c|}{ 90.54 $\pm$ 0.90} 
&	91.13 $\pm$ 0.35 \\\midrule
\multicolumn{1}{c|}{Gradual - 14 Domains} & \multicolumn{1}{c|}{90.58 $\pm$ 0.38} & 90.35 $\pm$ 0.71 \\\bottomrule
\end{tabular}
\end{table}
We further show that the performance of the algorithm monotonically increases as it progress to adapt to each domain and learn a cross-domain representation. Figure \ref{fig:gradual} shows the trend in algorithm performance on rotating MNIST and FMOW.

\paragraph{One-step loss is insufficient as a substitute, but can be improved by temporal model}~~
The inefficiency of adaptation without historical information appears with all datasets we have considered, reflected through Table \ref{table:rotate}, \ref{table:fmow}, \ref{table:port}. In almost all cases, we observe that learning with a temporal model (LSTM) achieves better accuracy than a convolutional model (CNN). 
The gap is especially large on FMOW, the large-scale dataset in our experiments. We suspect that optimizing with only partial information can lead to suboptimal performance on such a complicated task. This is reflected through the better performance achieved by Direct Adaptation with CNN when compared to Gradual Adaptation with CNN and 3 domains (Table \ref{table:fmow}). In contrast, Gradual Adaptation with LSTM overtakes the performance of Direct Adaptation, suggesting the importance of historical representation. Another evidence is that Figure \ref{fig:gradual} shows that Gradual Adaptation with a temporal model performs better on all indexes of domains on rotating MNIST and FMOW.

\paragraph{Existence of optimal time horizon}~~
With the Portraits dataset and different lengths of horizon $T$, we verify that then optimal time horizon can be reached when model performance is saturated in Table \ref{table:port}. The performance of the model increases drastically when the shifts in domains are considered, shown by the difference in the performance of No Adaptation, Direct Adaptation, and Gradual Adaptation with $5$ and $7$ domains. However, this increase in performance becomes relatively negligible when $T$ is large (the performance gain is saturated when the horizon length is $9 - 14$). This rate of growth in accuracy implies that there exists an optimal number of domains.

\begin{figure}
    \centering
    \begin{subfigure}[b]{0.35\textwidth}
         \centering
         \includegraphics[width=\textwidth]{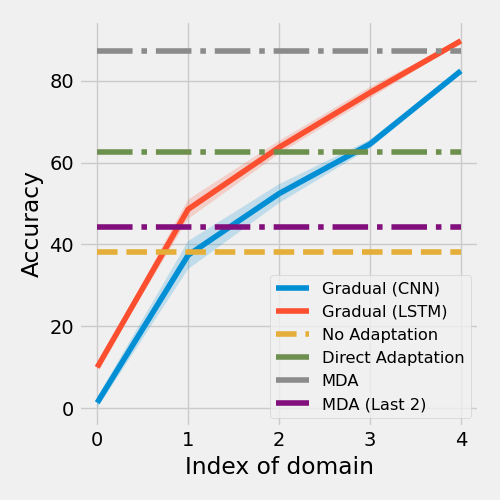}
         \caption{Rotating MNIST}\label{fig:rotate120}
     \end{subfigure}
     \begin{subfigure}[b]{0.35\textwidth}
         \centering
         \includegraphics[width=\textwidth]{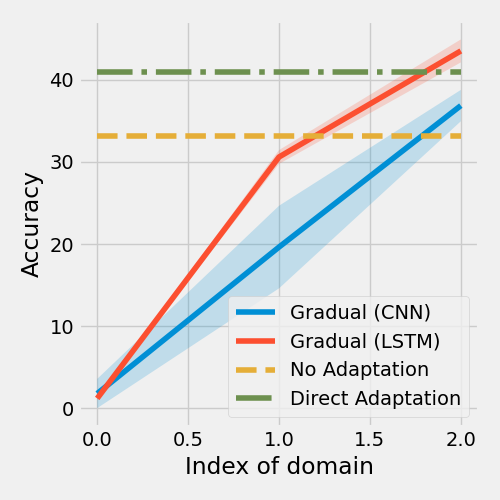}
         \caption{FMOW}\label{fig:gradual}
     \end{subfigure}
    \begin{subfigure}[b]{0.6\textwidth}
         \centering
         \includegraphics[width=\textwidth]{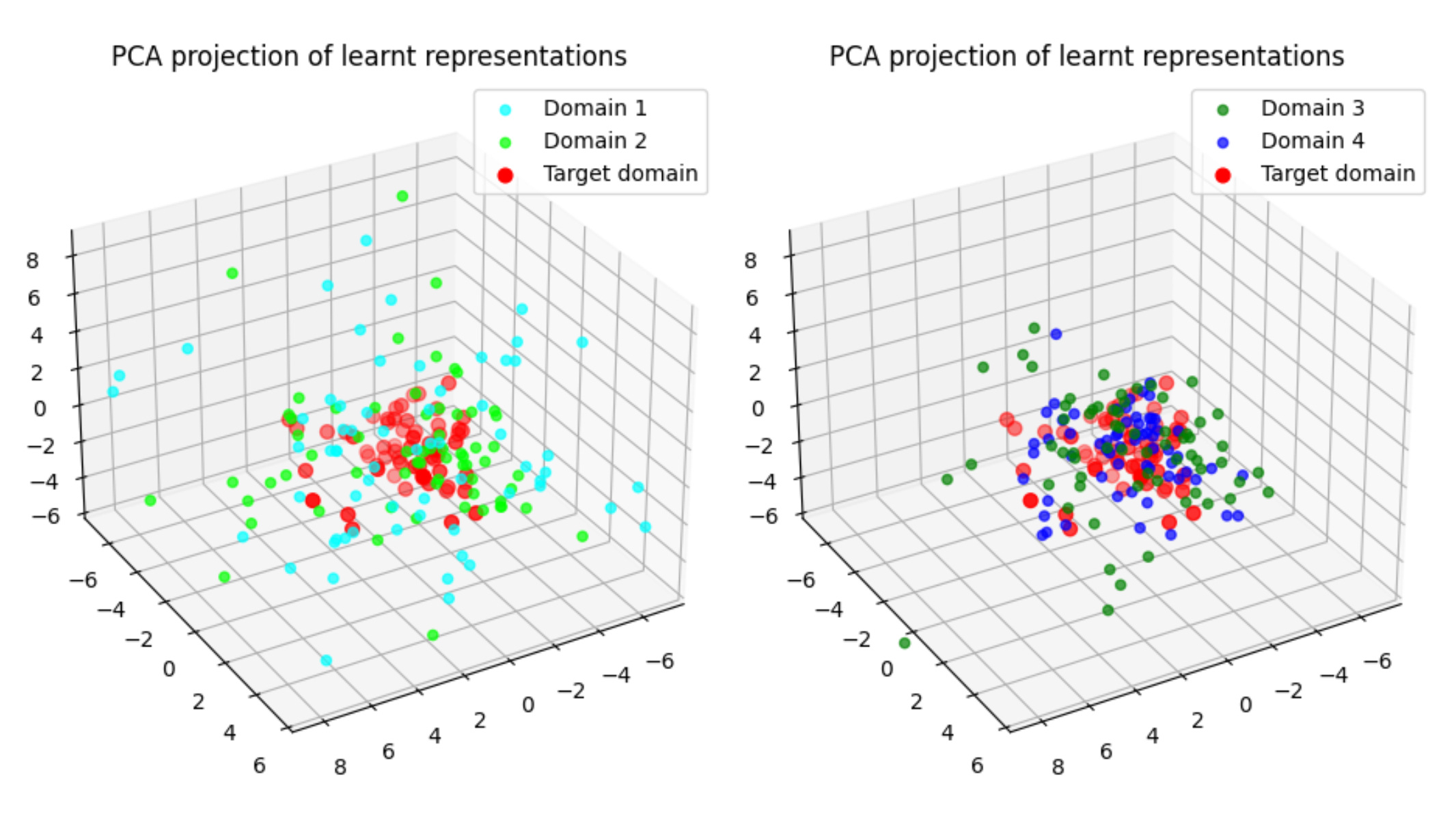}
         \caption{PCA projection}\label{fig:pca}
     \end{subfigure}
     \begin{subfigure}[b]{0.35\textwidth}
         \centering
         \includegraphics[width=\textwidth]{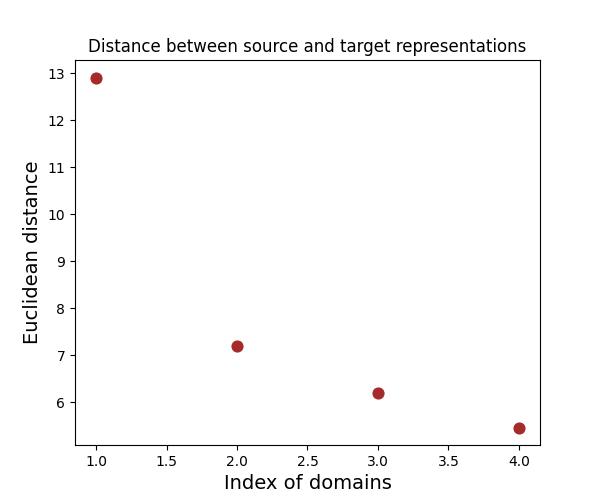}
         \caption{Plot of euclidean distance}\label{fig:eu}
     \end{subfigure} 
     \caption{Figure \ref{fig:rotate120} compares the training curves on rotating MNIST with maximum rotation of 120 degrees. Figure \ref{fig:gradual} compares the training curves on FMOW. Figure \ref{fig:pca} is the PCA projection plot of learned representation and Figure \ref{fig:eu} plots the Euclidean distance to the target domain of the projections of learned representations.}
\end{figure}
\paragraph{Comparison with MDA}~~ Lastly, we remark on the results (Table \ref{table:rotate2} and \ref{new_result}) achieved by Gradual Adaptation in comparison with MDA methods (MDAN \citep{zhao2018adversarial}, DARN \citep{wen2020domain} and Fish \citep{shi2022gradient}). On Rotating MNIST, we note that Gradual Adaptation outperforms MDA methods when the shift is large ($60$ and $120$ degree rotation) while relaxing the requirement of simultaneous access to all source domains. It is only when the shift is relatively small ($30$-degree rotation), MDA method DARN achieves a better result than ours. When the MDA method is only presented with the last two training domains, Gradual Adaptation offers noticeable advantages regardless of the shift in the domains (Table \ref{table:rotate2}). This demonstrates the potential of graduate domain adaptation in real applications that even when the data are not simultaneously presented it is possible to achieve competitive or even better performance. 
\begin{table}[htb]
\centering
\caption{Results on rotating MNIST dataset with Gradual Adaptation on 5 domains and MDA methods, Fish \citep{shi2022gradient} and DARN \citep{wen2020domain} }\label{new_result}
\begin{tabular}{@{}clcc@{}}
\toprule
& \multicolumn{1}{c}{Fish}  & DARN                 & Ours                      \\ \midrule
\multicolumn{1}{c|}{0-30 degree}  & \textbf{95.83} $\pm$ \textbf{0.13} & 94.20 $\pm$ 0.27  & 94.83 $\pm$ 0.49          \\
\multicolumn{1}{c|}{0-60 degree}  & 90.57 $\pm$ 0.37          & 89.50 $\pm$ 0.12 & \textbf{92.52} $\pm$ \textbf{0.25} \\
\multicolumn{1}{c|}{0-120 degree} & 83.26 $\pm$ 1.58          & 82.28 $\pm$ 2.42 & \textbf{89.72} $\pm$ \textbf{0.35} \\ \bottomrule
\end{tabular}
\end{table}
One possible reason for this can be illustrated by Figure \ref{fig:eu}, in which we plot the PCA projections and the Euclidean distance to the target domain of learned representations. From Figure \ref{fig:eu}, we can see that the gradual domain adaptation method is able to gradually learn an increasingly closer representation of the source domain to the target domain. This helps our method to make our prediction based on more relevant features while MDA methods may be hindered by not-so-relevant features from multiple domains.

\paragraph{Comparison with Meta-learning}
In addition to comparing our method with those achieved by domain adaptation methods, we also compare our work with Meta-learning for evolving distributions (EAML \citep{liu2020learning}). In contrast to gradual domain adaptation, these methods instead assume that the testing distribution is gradually evolving and thus the learned classifier is desired to be adaptive to these changing targets. In addition, the learned classifier is also asked to ``not forget''. Due to the different objectives, we can see that the baseline method (EAML) is suboptimal on a fixed target with changing training distributions, when compared to our method (Table \ref{table:rotate2}).
\section{Conclusion}
We studied the problem of supervised gradual domain adaptation, which arises naturally in applications with temporal nature. In this setting, 
we provide the first learning bound of the problem and our results are general to a range of loss functions and are algorithm agnostic. 
Based on the theoretical insight offered by our theorem, we designed a primal-dual learning objective to learn an effective representation across domains while learning a classifier. 
We analyze the implications of our results through experiments on a wide range of datasets. 
\section*{Acknowledgements}
We want to thank the reviewers and the editors for their constructive comments during the review process.  
Jing Dong and Baoxiang Wang are partially supported by National Natural Science Foundation of China (62106213, 72150002) and Shenzhen Science and Technology Program (RCBS20210609104356063, JCYJ20210324120011032). Han Zhao would like to thank the support from a Facebook research award.

\bibliography{reference}

\newpage
\appendix

\section{Proof of Lemma~\ref{lem:loss_wp_bound}} 
\losswpbound*
\begin{proof}
Let $\|f\|_{\text{Lip}}$ denotes the minimum Lipschitz constant of $f$, then
\begin{align*}
    \Exp_\mu[\ell_h(X, Y)] - \Exp_\nu[\ell_h(X', Y')] 
    = \ &  \int \ell_h(X, Y)~d\mu - \int \ell_h(X', Y')~d\nu \\
    \leq \ & \sup_{\|f\|_{\text{Lip}} \leq \rho}\int f(X, Y)~d\mu - \int f(X, Y)~d\nu \tag{$\ell(\cdot,\cdot)$ is $\rho$-Lipschitz continuous}\\
    = \ & \rho\cdot \sup_{\|f\|_{\text{Lip}} \leq 1}\int f(X, Y)~d\mu - \int f(X, Y)~d\nu \\
    \leq \ & \rho W_1(\mu,\nu) \tag{Kantorovich-Rubinstein Duality of $W_1$}\\
    \leq \ & \rho W_p(\mu,\nu) \tag{Monotonicity of $W_p$}\\ 
    \leq \ & \rho\Delta \,,
\end{align*}

where the second to last inequality is due to the monotonicity of the $W_p(\cdot,\cdot)$ metric, i.e. for $1\leq p \leq q$, $W_p (\cdot, \cdot) \leq W_q ( \cdot, \cdot)$. 

To see this, we first define $\beta(t) := |t|^{q/p} $ and notice that it is convex in $t$. Then, let $\Gamma(\mu, \nu)$ be the set of all couplings between $\mu, \nu$ and $\gamma$ be any coupling between $\mu$ and $\nu$, we have
\begin{align*}
    \left( \int \| x - y\|^p d \gamma \right)^{1/p} &= \left(\Exp_\gamma[\|x - y\|^p]\right)^{1/p} \\
    &= \beta\left(\Exp_\gamma[\|x - y\|^p]\right)^{1/q} \\
    &\leq \left(\Exp_\gamma[\beta\left(\|x - y\|^p]\right)\right)^{1/q} \\
    &= \left(\Exp_\gamma[\|x - y\|^q]\right)^{1/q} \\
    &= \left( \int \| x - y\|^q d \gamma \right)^{1/q},
\end{align*}
where $x \sim \mu, y \sim \nu$ and the inequality is due to Jensen's inequality since $\beta(\cdot)$ is convex. As the above inequality holds for every coupling $\gamma\in\Gamma(\mu, \nu)$, it follows that 
\begin{equation*}
    W_p(\mu,\nu) = \inf_{\gamma\in\Gamma(\mu,\nu)}\left( \int \| x - y\|^p d \gamma \right)^{1/p}\leq \inf_{\gamma\in\Gamma(\mu,\nu)}\left( \int \| x - y\|^q d \gamma \right)^{1/q} = W_q(\mu,\nu),
\end{equation*}
which implies the monotonicity of $W_p(\cdot,\cdot)$.
\end{proof}

\section{Proof of Theorem~\ref{thm:main}} 
To simplify the notation, we let $f(x, y) = \ell(h(x), y)$ and denote the function class of $f$ as $\mathcal{F}$. We first state a few results from non-stationary time series analysis \citep{kuznetsov2020discrepancy}, which are used in our theorem. 
\begin{lem}[Corollary 2 \citet{kuznetsov2020discrepancy}] \label{lem:cor2}
For any $\delta>0$, with probability at least $1-\delta$, 
\begin{align*}
\mathbb{E}\left[f\left(Z_{T+1}\right) \mid Z_{1}^{T}\right] 
\leq \sum_{t=1}^{T} \frac{1}{T} f\left(Z_{t}\right)+\operatorname{disc}_T+ \frac{1}{T} +6 M \sqrt{4 \pi \log T \Re_{T-1}^{s e q}(\mathcal{F})}+\frac{M}{T}  \sqrt{8 \log \frac{1}{\delta}} \,,
\end{align*}
where $\operatorname{disc}_T$ is the discrepancy measure defined in Definition \ref{def:discrepancy} and $\mathfrak{R}_{T}^{s e q}(\mathcal{F})$ is the sequential Rademacher complexity defined in Definition~\ref{def:seq_complexity}.
\end{lem}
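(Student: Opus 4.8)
The plan is to read this as a uniform-convergence bound for the non-stationary, dependent process $Z_1,\dots,Z_{T+1}$ and to follow the martingale-plus-symmetrization machinery behind the Kuznetsov--Mohri discrepancy framework. The object I would control directly is the single data-dependent functional
\[
\Phi(Z_1^T) = \sup_{f\in\mathcal{F}}\Big(\mathbb{E}\big[f(Z_{T+1})\mid Z_1^T\big] - \tfrac1T\sum_{t=1}^{T} f(Z_t)\Big),
\]
since for the $f=\ell_h$ attaining the left-hand side we have $\mathbb{E}[f(Z_{T+1})\mid Z_1^T]-\tfrac1T\sum_t f(Z_t)\le \Phi(Z_1^T)$; a high-probability upper bound on $\Phi$ therefore proves the lemma.

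First I would peel off the non-stationarity by inserting the one-step conditional means $\mathbb{E}[f(Z_t)\mid Z_1^{t-1}]$:
\[
\mathbb{E}\big[f(Z_{T+1})\mid Z_1^T\big] - \tfrac1T\sum_{t=1}^{T} f(Z_t)
= \underbrace{\Big(\mathbb{E}[f(Z_{T+1})\mid Z_1^T] - \tfrac1T\sum_{t=1}^{T} \mathbb{E}[f(Z_t)\mid Z_1^{t-1}]\Big)}_{\text{bias}}
+ \underbrace{\tfrac1T\sum_{t=1}^{T}\big(\mathbb{E}[f(Z_t)\mid Z_1^{t-1}] - f(Z_t)\big)}_{=:A(f)} .
\]
Taking $\sup_f$, the bias term is bounded \emph{uniformly in $f$} by $\operatorname{disc}_T$ directly from Definition~\ref{def:discrepancy} with uniform weights, contributing the $\operatorname{disc}_T$ summand with no residual randomness. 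What remains is $\sup_{f}A(f)$, a supremum of a weighted sum of bounded martingale differences with respect to the filtration generated by $Z_1^t$ (bounded because $\|f\|_\infty\le M$ by Assumption~\ref{asmp:lip}).

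I would then bound $\sup_f A(f)$ in two stages. For the fluctuation around its mean, I would apply an Azuma/McDiarmid-type inequality to the Doob martingale of $\sup_f A(f)$: the per-step bounded-difference constant is $O(M/T)$, so summing squared increments gives an $\ell_2$ weight of order $1/\sqrt{T}$ and a tail term of the form $M\,\|q\|_2\sqrt{8\log(1/\delta)}$, which reduces to the stated expression under $q_t=1/T$. For the expectation $\mathbb{E}[\sup_f A(f)]$ I would run a \emph{sequential} symmetrization: pass to a decoupled tangent sequence, attach Rademacher signs along a $\mathcal{Z}$-valued binary tree (Definition~\ref{def:zvalue}), and upper bound by the sequential Rademacher complexity, with a maximal inequality over the $2^{T}$ leaves supplying the constant and the $\sqrt{4\pi\log T}$ factor; this yields the $6M\sqrt{4\pi\log T}\,\mathfrak{R}_{T-1}^{seq}(\mathcal{F})$ term together with the small additive $1/T$-type offset. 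Summing the bias, fluctuation, and symmetrization contributions gives the claimed inequality.

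The hard part is the sequential symmetrization and the accompanying $\sqrt{\log T}$ loss: because the $Z_t$ are dependent and non-stationary, classical i.i.d.\ symmetrization is unavailable, and one must simultaneously decouple the conditional means and symmetrize over the tree structure rather than over a permutation of coordinates. This is exactly the technical core of the cited discrepancy framework, which I would invoke as established rather than rederive; the decomposition, the uniform discrepancy bound, and the martingale tail bound are then routine.
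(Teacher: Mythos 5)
The paper contains no proof of this statement to compare against: Lemma~\ref{lem:cor2} is imported verbatim as Corollary~2 of \cite{kuznetsov2020discrepancy}, and the appendix simply restates it before using it. What you have written is therefore a blind reconstruction of the \emph{source's} proof, and at the architectural level it is faithful: inserting the one-step conditional means $\mathbb{E}[f(Z_t)\mid Z_1^{t-1}]$ so that the bias term is exactly the uniformly weighted discrepancy, and reducing the problem to the supremum of the martingale average $A(f)$, is precisely how the discrepancy framework proceeds, and the tree-based sequential symmetrization with a maximal inequality over paths is indeed where the $6M\sqrt{4\pi\log T}\,\mathfrak{R}^{seq}(\mathcal{F})$ term originates.

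Two caveats, one substantive and one bookkeeping. Substantively, your two-stage plan for $\sup_f A(f)$ --- bound the expectation by sequential symmetrization, then concentrate around it via Azuma/McDiarmid on the Doob martingale with increments $O(M/T)$ --- is the i.i.d.\ pattern, and it does not go through as stated for dependent $Z_t$: changing $Z_t$ perturbs the conditional laws of the entire future $Z_{t+1}^T$, and since the supremum does not commute with conditional expectation, the Doob-martingale increments of $\sup_f A(f)$ are not controlled by the coordinate-wise constant $2Mq_t$ alone. The cited source avoids this by establishing the high-probability bound in one shot --- a Chernoff/moment-generating-function argument symmetrized over the $\mathcal{Z}$-valued tree, with the Gaussian-type maximal inequality producing both the $\sqrt{4\pi\log T}$ factor and the tail term $M\|q\|_2\sqrt{8\log(1/\delta)}$, and the free additive $\|q\|_2$ offset coming from optimizing a parameter in that step. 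So the machinery you propose to ``invoke as established'' must carry the concentration step as well, not merely $\mathbb{E}[\sup_f A(f)]$. On bookkeeping: with uniform weights $q_t = 1/T$ one has $\|q\|_2 = 1/\sqrt{T}$, so your (correct) Azuma-scale tail is $(M/\sqrt{T})\sqrt{8\log(1/\delta)}$, which does \emph{not} reduce to the factor $M/T$ printed in the lemma; be aware that the paper's transcription of the corollary is itself garbled (a misplaced radical, $1/T$ where the source has $\|q\|_2$, and Definition~\ref{def:discrepancy} is missing its minus sign), so you should state $\|q\|_2 = 1/\sqrt{T}$ explicitly rather than claim agreement with the displayed expression.
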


\begin{restatable}{lem}{corthree}[Corollary 3 of \citet{kuznetsov2020discrepancy}]\label{lem:cor3}
Let $h_{0}=\operatorname{argmin}_{h \in \mathcal{H}} \frac{1}{T}\sum_{t=1}^{T} \ell\left(h(X_t), Y_{t}\right)$ and $q = (q_1, \ldots, q_T)$ be real numbers. 
Let $q = (q_1, \ldots, q_T)$ be real numbers. For any $\delta>0$, with probability at least $1-\delta$, 
\begin{align}
\mathbb{E}\left[f_{0}\left(Z_{T}\right) \mid Z_{1}^{T-1}\right] 
&\leq \inf _{f \in \mathcal{F}} \mathbb{E}\left[f\left(Z_{T}\right) \mid Z_{1}^{T-1}\right]+\operatorname{disc}_T+ \frac{1}{T} +6 M \sqrt{4 \pi \log T} \mathfrak{R}_{T-1}^{s e q}(\mathcal{F})+\frac{M}{T} \sqrt{8 \log \frac{1}{\delta}} \,,
\end{align}
where $\operatorname{disc}_T$ is the discrepancy measure defined in Definition~\ref{def:discrepancy} and $\mathfrak{R}_{T}^{s e q}(\mathcal{F})$ is the sequential Rademacher complexity  defined in Definition~\ref{def:seq_complexity}.
\end{restatable}

\mainthm*

\begin{proof}
With Lemma \ref{lem:loss_wp_bound}, we first show that the discrepancy measure can also be bounded by $\rho \Delta$, the maximum Wasserstein distance between any two consecutive data distributions. We start by decomposing the discrepancy measure utilizing an adjustable summation $\frac{1}{s} \sum_{t=T-s+1}^{T} \mathbb{E}\left[f\left(Z_{t}\right) \mid Z_{1}^{t-1}\right]$, 
\begin{align*}
\operatorname{disc}_T \leq & \sup_{f \in \mathcal{F}}\left(\frac{1}{s} \sum_{t=T-s+1}^{T} \mathbb{E}\left[f\left(Z_{t}\right) \mid Z_{1}^{t-1}\right]- \frac{1}{T} \sum_{t=1}^{T}  \mathbb{E}\left[f\left(Z_{t}\right) \mid Z_{1}^{t-1}\right]\right) \\
&+\sup_{f \in \mathcal{F}}\left(\mathbb{E}\left[f\left(Z_{T}\right) \mid Z_{1}^{T-1}\right]-\frac{1}{s} \sum_{t=T-s+1}^{T} \mathbb{E}\left[f\left(Z_{t}\right) \mid Z_{1}^{t-1}\right]\right) \,.
\end{align*}
Notice that the second term can be further decomposed as
\begin{align*}
    &\sup_{f \in \mathcal{F}}\left(\mathbb{E}\left[f\left(Z_{T}\right) \mid Z_{1}^{T-1}\right]-\frac{1}{s} \sum_{t=T-s+1}^{T} \mathbb{E}\left[f\left(Z_{t}\right) \mid Z_{1}^{t-1}\right]\right) \\
    \leq & \frac{1}{s} \sum^{T-s+1}_{t=1}\sup _{f \in \mathcal{F}}\left(\mathbb{E}\left[f(Z_{T}) \mid Z_{1}^{T-1}\right]-\mathbb{E}\left[f\left(Z_{t}\right) \mid Z_{1}^{t-1}\right]\right) \\
    \leq & \frac{1}{s} \sum^{T-s+1}_{t=1}\sum_{r=t}^{T} \sup _{f \in \mathcal{F}}\left(\mathbb{E}\left[f\left(Z_{r+1}\right) \mid Z_{1}^{r}\right]-\mathbb{E}\left[f\left(Z_{r}\right) \mid Z_{1}^{r-1}\right]\right)\,.
\end{align*}
As the supremum is taken over $f \in \mathcal{F}$, this cumulative difference term can be understood as comparing the loss caused by the same classifier along the trajectory across the time horizon. As we have shown in Lemma \ref{lem:loss_wp_bound}, for any classifier, the loss incurred on two consecutive distribution of bounded $W_p$ distance of $\Delta$ is at most $\rho \Delta$. Thus we have $\operatorname{disc}_T \leq T\rho \Delta$. 

Armed with the bounded discrepancy measure, we are now ready to bound the learning error in interest. Let us first define $f_{0}=\operatorname{argmin}_{f \in \mathcal{F}} \sum_{t=1}^{T} \frac{1}{T} f\left(Z_{t}\right)$ and $\Phi\left(Z_{1}^{T}\right)=\sup _{f \in \mathcal{F}}\left(\mathbb{E}\left[f\left(Z_{T}\right) \mid Z_{1}^{T-1}\right] \right.$ $\left.-\sum_{t=1}^{T} \frac{1}{T} f\left(Z_{t}\right)\right)$. Then the conditional expected difference between $\ell_{h_{T}}\left(X_{T}, Y_{T}\right)$ and $f_{0}$ can be upper bounded by $\Phi\left(Z_{1}^{T}\right)$.
\begin{align*}
    &\mathbb{E}\left[\ell_{h_{T}}\left(X_{T}, Y_{T}\right)\mid Z_{1}^{T-1}\right]- \mathbb{E}\left[f_{0}\left(Z_{T}\right)\mid Z_1^{T-1}\right] \\
    =& \left(\mathbb{E}\left[\ell_{h_{T}}\left(X_{T}, Y_{T}\right) \mid Z_{1}^{T-1}\right] - \frac{1}{T} \sum^{T-1}_{t=1} \left[\ell_{h_{T}}\left(X_{t}, Y_{t}\right)\right] \right)
    + \left(  \frac{1}{T}\sum^{T-1}_{t=1} \left[\ell_{h_{T}}\left(X_{t}, Y_{t}\right)\right] -   \frac{1}{T}\sum^{T-1}_{t=1}f_{0}\left(Z_{t} \right) \right) \\
    +& \left(  \frac{1}{T}\sum^{T-1}_{t=1}f_{0}\left(Z_{t} \right) - \mathbb{E}\left[f_{0}\left(Z_{T+1}\right)\mid Z_1^{T-1}\right] \right)\\
    \leq & 2\Phi(Z_1^T) + \left(  \frac{1}{T}\sum^{T-1}_{t=1} \left[\ell_{h_{T}}\left(X_{t}, Y_{t}\right)\right] - \frac{1}{T}\sum^{T-1}_{t=1}f_{0}\left(Z_{t} \right) \right)\,.
\end{align*}
By Lemma \ref{lem:cor2}, for any $\delta>0$, with probability at least $1-\delta$, the following inequality holds for any $f$ 
\begin{align*}
\mathbb{E}\left[f\left(Z_{T}\right) \mid Z_{1}^{T-1}\right] 
&\leq \frac{1}{T}\sum_{t=1}^{T-1}  f\left(Z_{t}\right)+\operatorname{disc}_T+ \frac{1}{T} +6 M \sqrt{4 \pi \log T} \mathcal{R}_{T-1}^{s e q}(\mathcal{F})+\frac{M}{T} \sqrt{8 \log \frac{1}{\delta}} \,.
\end{align*}
Rearranging the terms, substituting the upper bound of $\operatorname{disc}_T$ and hence we have that
\begin{align*}
    \Phi\left(Z_{1}^{T}\right) \leq T\rho \Delta+ \frac{1}{T} +6 M \sqrt{4 \pi \log T} \mathcal{R}_{T-1}^{s e q}(\mathcal{F})+ \frac{M}{T} \sqrt{8 \log \frac{1}{\delta}} \,.
\end{align*}
It remains to bound $  \frac{1}{T}\sum^{T-1}_{t=1} \left[\ell_{h_{T}}\left(X_{t}, Y_{t}\right)\right] - \frac{1}{T}\sum^{T-1}_{t=1}f_{0}\left(Z_{t} \right)$, where we leverage classical results from online learning literature. Let $\hat{h}_{t}$ be the classifier picked by an optimal online learning algorithm at time $t$. Then with a total $nT$ number of data points, the optimal algorithm will suffer an additional loss of at most $\mathcal{O}\sqrt{nT}$ compared to the optimal loss. Thus we have 
\begin{align*}
    \frac{1}{T}\sum^{T-1}_{t=1}\mathbb{E}\left[\ell_{\hat{h}_{t}}\left(X_{t}, Y_{t}\right)\right]- \frac{1}{T}\sum^{T-1}_{t=1}f_{0}\left(Z_{t} \right)=O\left(\frac{1}{\sqrt{nT}}\right) \,.
\end{align*}
Further, the difference between our classifier and the classifier picked by the online learning algorithm $ \sum^{T-1}_{t=1} \left[\ell_{h_{T}}\left(X_{t}, Y_{t} \right)\right] -  \sum^{T-1}_{t=1}  \mathbb{E}\left[\ell_{\hat{h}_{t}}\left(X_{t}, Y_{t}\right)\right]$ can be upper bounded by the VC inequality (Section 3.5 \citep{bousquet2004introduction}) as
\begin{align*}
     \frac{1}{T}\sum^{T-1}_{t=1} \left( \mathbb{E}\left[\ell_{h_{T}}\left(X_{t}, Y_{t} \right) \right] - \mathbb{E}\left[\ell_{\hat{h}_{t}}\left(X_{t}, Y_{t}\right)\right] \right)
    \leq & \frac{1}{T} \sum^{T-1}_{t=1} \left( \mathbb{E}\left[\ell_{h_{T}}\left(X_{t}, Y_{t} \right) \right] - \min_{\Tilde{h}}\mathbb{E}\left[\ell_{\Tilde{h}}\left(X_{t}, Y_{t} \right) \right] \right) \\
    \leq & \frac{1}{T}\sqrt{\frac{\text{VCdim}(\mathcal{H}) + \log (2/\delta)}{2n}}\,.
\end{align*}
Lastly by Lemma \ref{lem:cor3} and combining the terms, we have the final bound in the theorem 
\begin{align*}
\mathbb{E}\left[\ell_{h_T}\left(X_T, Y_{T}\right) \mid Z_{1}^{T-1}\right] 
\leq & \mathbb{E}\left[\ell_{h_0}\left(X_T, Y_T\right) \mid Z_{1}^{T-1}\right] + \frac{3}{T} +\frac{3M}{T} \sqrt{8 \log \frac{1}{\delta}} + \frac{1}{T}\sqrt{\frac{\text{VCdim}(\mathcal{H}) + \log (2/\delta)}{2n}} \\
&+ O\left(\frac{1}{\sqrt{nT}}\right) + 18 M \sqrt{4 \pi \log T} \mathfrak{R}_{T-1}^{s e q}(\mathcal{F})+ 3T\rho\Delta\,.\qedhere
\end{align*}
\end{proof}

\section{Experimental set up}
\subsection{Dataset}
\textbf{Rotating MNIST}~~
This is a semi-synthetic dataset from the MNIST dataset. We rotate the image continuously from $0-30$, $0-60$, and $0-120$ degrees across the time horizon, forming 3 datasets for our gradual adaptation task. We create each dataset in a way such that the degree of rotation increases linearly as $t$ increases. Each dataset is then partitioned into more domains, e.g. 0-30 degrees rotations into 5 domains would give domains each with images of $0-5$, $6-12$, $13-18$, $19-24$, $25-30$ degrees rotations respectively. 

\textbf{Portraits}~~This dataset contains $37921$ portraits of high school seniors across years (1910 - 2010) \citep{ginosar2015century}. The data is then partitioned into domains by their year index. According to the specified number of domains, the partition is done so evenly by years (e.g. for a 5 domains dataset, the split is so that the domains has images from the year 1910-1930, 1930-1950, 1950-1970, 1970-1990, 1990-2010 respectively). 

\textbf{FMOW}~~This dataset is composed of over 1 million satellite images and their building/land use labels from 62 categories \citep{christie2018functional,koh2021wilds} from $2002$-$2017$. The input is an RGB image of $224 \times 224$ pixels and each image comes with metadata of the year it is taken. The data is split into domains chronologically and the target domain is the year $2017$. Specifically, the data is split similarly to the splitting used for the Portraits dataset, where the images are separated into domains according to their year. 

\subsection{Algorithms and model architecture}
\textbf{No Adaptation}~~
For this method, we perform empirical risk minimization with cross-entropy loss on the initial domain and then test on the target domain. To extract the feature information, we use model VGG16 \citep{simonyan2014very} for MNIST/Portrait, and ResNet18 \citep{he2016deep} for FMOW. Then a two-layer MLP is used as the classifier. 

\textbf{Direct Adaptation}~~ 
We group the training domains $t = 1, \ldots, T - 1$ and let the algorithm learn to adapt from the grouped domain to the target domain. 
We use cross-entropy loss with objective (\ref{eq:onesteploss}). To extract the features,  we used VGG16 \citep{simonyan2014very} for MNIST/Portrait and ResNet18 \citep{he2016deep} for FMOW. To test the effectiveness of the encoding historical representation, we use a 2-layer GRU for MNIST/Portrait and a 1-layer LTSM for FMOW. Then a two-layer MLP is used as the classifier. 

\textbf{Multiple Source Domain Adaptation (MDA)}~~
We compare with algorithms designed for MDA, where the algorithm has simultaneous access to multiple labeled domains $t = 1, \ldots, T - 1$ and learns to adapt to the target domain. We use \textit{maxmin} and \textit{dynamic} variants of MDAN \citep{zhao2018adversarial}, Fish \citep{shi2022gradient} and DARN \citep{wen2020domain} as our baseline algorithms. We also provide the comparisons where the MDA algorithm only has access to the last two source domains ($T - 2, T-1$).

\textbf{Gradual Adaptations}~~
Our proposed approach trains the algorithm to sequentially adapt from the initial domain $t = 1$ to the last domain $t=T$. At each time step, the algorithm only has access to two consecutive domains. 
We use cross-entropy loss with objective (\ref{eq:onesteploss}) to perform successive adaptations with gradually changing domains. The rest of the setup (e.g. model for feature extraction and classifier) is the same as the ones for Direct Adaptation. 

\textbf{Meta Learning}~~
We also compare our algorithm with algorithms designed for Meta-learning with continuously evolving data \citep{liu2020learning}. Specifically, the method continually performs adaptations and aims to learn a meta-representation for evolving target domains. However, different from our objective, the method also tries to preserve performance from previous target domains as it adapts. We compare to EAML proposed by \citet{liu2020learning} and evaluated on the rotating MNIST dataset with 5 domains. For implementation, we used LeNet for learning the meta-representation and a two-layer classifier as the meta-adapter (as instructed in the original paper). For a fair comparison, we let the batch size be 64. All other parameters are kept the same according to their official code release. 

\section{More experimental results}
In this section, we present additional experimental results and compare our algorithm with our baseline algorithms with shifting MNIST.  
\paragraph{Shifting MNIST}
The shifting MNIST is a semi-synthetic dataset from the MNIST dataset. 
For the shifting MNIST, we shift the pixel to up, right, down, and left by $1, 3, 5$ pixels. Each domain contains images that have been shifted in at least one direction and each consecutive domains contain images shifted in different directions.
\begin{table*}[ht]\centering
\caption{Results on shifting MNIST dataset with Gradual Adaptation on 5 domains, Direct Adaptation, and No Adaptation. }\label{table:shift}
\begin{tabular}{@{}cccccc@{}}
\toprule
\multicolumn{6}{c}{Shifting MNIST}                                                                                                                                                                                \\ \midrule
\multicolumn{1}{c|}{}         & \multicolumn{2}{c|}{Gradual Adaptation with 5 domains}                         & \multicolumn{2}{c|}{Direct Adpatation}                                         & No Adaptation    \\ \midrule
\multicolumn{1}{c|}{}         & \multicolumn{1}{c|}{CNN}              & \multicolumn{1}{c|}{LSTM}             & \multicolumn{1}{c|}{CNN}              & \multicolumn{1}{c|}{LSTM}              & CNN              \\ \midrule
\multicolumn{1}{c|}{1 pixels} & \multicolumn{1}{c|}{86.19 $\pm$ 1.54} & \multicolumn{1}{c|}{\textbf{93.81} $\pm$ 0.35} & \multicolumn{1}{c|}{76.72 $\pm$ 0.70} & \multicolumn{1}{c|}{89.13 $\pm$ 1.10}  & 87.36 $\pm$ 1.43 \\ \midrule
\multicolumn{1}{c|}{3 pixels} & \multicolumn{1}{c|}{73.61 $\pm$ 2.53} & \multicolumn{1}{c|}{\textbf{79.11} $\pm$ 0.95}  & \multicolumn{1}{c|}{60.12 $\pm$ 1.07} & \multicolumn{1}{c|}{74.42 $\pm $ 1.89} & 70.63 $\pm$ 2.56 \\ \midrule
\multicolumn{1}{c|}{5 pixels} & \multicolumn{1}{c|}{62.80 $\pm$ 1.81}             & \multicolumn{1}{c|}{\textbf{71.31} $\pm$ 2.55} & \multicolumn{1}{c|}{46.83 $\pm$ 0.56} & \multicolumn{1}{c|}{64.25 $\pm$ 2.57}  & 44.41 $\pm$ 2.68 \\ \bottomrule
\end{tabular}
\end{table*}

\begin{table*}[ht]\centering
\caption{Results on shifting MNIST dataset with Gradual Adaptation on 5 domains and Multiple Source Domain Adaptation method (MDA) \citep{zhao2018adversarial}  }\label{table:shift2}
\begin{tabular}{@{}cccccc@{}}
\toprule
\multicolumn{6}{c}{Shifting MNIST}                                                                                                                                                                                                                                                      \\ \midrule
\multicolumn{1}{c|}{}         & \multicolumn{2}{c|}{\begin{tabular}[c]{@{}c@{}}Gradual Adaptation \\ with 5 domains\end{tabular}} & \multicolumn{3}{c}{MDA}                                                                                                                              \\ \cmidrule(l){2-6} 
\multicolumn{1}{c|}{}         & \multicolumn{1}{c|}{CNN}                        & \multicolumn{1}{c|}{LSTM}                      & \multicolumn{1}{c|}{Maxmin}           & \multicolumn{1}{c|}{Dynamic}          & \begin{tabular}[c]{@{}c@{}}Dynamic\\ with last 2 domain\end{tabular} \\ \midrule
\multicolumn{1}{c|}{1 pixel}  & \multicolumn{1}{c|}{86.19 $\pm$ 1.54}           & \multicolumn{1}{c|}{93.81 $\pm$ 0.35}          & \multicolumn{1}{c|}{96.44 $\pm$ 0.72} & \multicolumn{1}{c|}{\textbf{97.85} $\pm$ 0.25} & 95.43 $\pm$ 0.82                                                     \\ \midrule
\multicolumn{1}{c|}{3 pixels} & \multicolumn{1}{c|}{73.61 $\pm$ 2.53}           & \multicolumn{1}{c|}{79.11 $\pm$ 0.95}          & \multicolumn{1}{c|}{\textbf{93.79} $\pm$ 0.34} & \multicolumn{1}{c|}{93.14 $\pm$ 4.45} & 78.21 $\pm$ 0.69                                                     \\ \midrule
\multicolumn{1}{c|}{5 pixels} & \multicolumn{1}{c|}{62.80 $\pm$ 1.81}           & \multicolumn{1}{c|}{71.31 $\pm$ 2.55}          & \multicolumn{1}{c|}{90.49 $\pm$ 0.72} & \multicolumn{1}{c|}{\textbf{93.49} $\pm$ 0.55} & 59.92 $\pm$ 0.92                                                     \\ \bottomrule
\end{tabular}
\end{table*}
We observe that when the shift in the domain is subtle ($1$ pixel), directly performing empirical risk minimization (No Adaptation) can even outperform the Gradual Adaptation with CNN model architecture. This showcases the insufficiency of the one-step loss. However, we note that with a temporal summarization, the one-step loss still overtakes the No Adaptation and Direct Adaptation methods. This is reflected through Gradual Adaptation with LSTM performs the best when compared to the No Adaptation and Direct Adaptation methods in all cases. 

On the shifting MNIST, results show that simultaneous access to all data domains is crucial, as MDA methods are more successful in all cases. This is in contrast with the Rotating MNIST dataset, for which we showed in an earlier section that gradual domain adaptation can be better than multiple source domain adaptation while relaxing the requirement of simultaneous access to the training domains. However, we note that Gradual Adaptation still performs better than the MDA method when it is restricted to accessing the last two domains on $3$ and $5$ pixels shift. This shows the potential of gradual domain adaptation when only sequential access to data domains is available. 

\paragraph{Experiment details}
We conducted the experiments on 2 GeForce RTX 3090, each with 24265 MB memory and with a CUDA Version of 11.4. Each set of experiments is repeated with 5 different random seeds to ensure reproducible results. We report experimental details (i.e. hyperparameters) here.

We penalize the gradient using the gradient penalty method originally designed for stable generative model training \citep{gulrajani2017improved}. We denote the coefficient of penalty as the GP factor. To stabilize the primal-dual training, we perform gradient descent on the critic side for multiple steps before descending on the classifier. We denote the number of times we descent the critic side per descent on the classifier side as $k$-critic. Algorithms are trained til their loss converges.

For the MNIST (rotating and shifting) dataset and Portrait dataset, we've used the following configurations
\begin{itemize}
    \item Learning rates: $1e-3$ (classifier and feature extractor), $5e-4$ (critic).
    \item Batch size: 64
    \item GP factor: 5
    \item $k$-critic: 5
    \item Temporal model: 2 layer GRU with a hidden size of 64.
\end{itemize}

For the FMOW dataset, the following configurations are used. We further normalize the data before feed into the LSTM layer for better performance.
\begin{itemize}
    \item Learning rates: $1e-3$ (classifier and feature extractor), $1e-5$ (critic).
    \item Gradient clipping: 3
    \item Batch size: 216
    \item GP factor: 1
    \item $k$-critic: 5
    \item Temporal model: 1 layer LSTM with a hidden size of 2048.
\end{itemize}

\end{document}